\documentclass{article}

\usepackage{microtype}
\usepackage{graphicx}
\usepackage{subfigure}
\usepackage{booktabs} 

\usepackage{hyperref}


\usepackage{amsmath,amssymb,amsthm}
\usepackage{thmtools,thm-restate}

\DeclareMathOperator*{\argmin}{arg\,min}
\DeclareMathOperator*{\argmax}{arg\,max}

\newcommand{\KL}{D_{\mathrm{KL}}}

\newtheorem*{remark}{Remark}

\newtheorem{lemma}{Lemma}
\newtheorem{corollary}{Corollary}
\newtheorem{proposition}{Proposition}

\newtheorem{definition}{Definition}


\usepackage[accepted]{icml2019}

\icmltitlerunning{Distributionally robust reinforcement learning}

\begin{document}

\twocolumn[
\icmltitle{Distributionally Robust Reinforcement Learning}




\begin{icmlauthorlist}
\icmlauthor{Elena Smirnova}{criteo}
\icmlauthor{Elvis Dohmatob}{criteo}
\icmlauthor{Jeremie Mary}{criteo}
\end{icmlauthorlist}

\icmlaffiliation{criteo}{Criteo AI Lab}

\icmlcorrespondingauthor{Elena Smirnova}{e.smirnova@criteo.com}

\icmlkeywords{Reinforcement Learning, Distributional Robustness, Risk-averse Exploration, Approximate Dynamic Programming}

\vskip 0.3in
]



\printAffiliationsAndNotice{}  

\begin{abstract}
Real-world applications require RL algorithms to act safely. During learning process, it is likely that the agent executes sub-optimal actions that may lead to unsafe/poor states of the system. Exploration is particularly brittle in high-dimensional state/action space due to increased number of low-performing actions. In this work, we consider risk-averse exploration in approximate RL setting. To ensure safety during learning, we propose the \emph{distributionally robust policy iteration} scheme that provides lower bound guarantee on state-values. Our approach induces a dynamic level of risk to prevent poor decisions and yet preserves the convergence to the optimal policy.
Our formulation results in a efficient algorithm that accounts for a simple re-weighting of policy actions in the standard policy iteration scheme. We extend our approach to continuous state/action space and present a practical algorithm, \emph{distributionally robust soft actor-critic}, that implements a different exploration strategy: it acts conservatively at short-term and it explores optimistically in a long-run. We provide promising experimental results on continuous control tasks.
\end{abstract}

\section{Introduction}
\label{sec:intro}

At root of difficulties to deploy Reinforcement Learning in the real-world is the problem of exploration. In all generality, in order to guarantee the optimality of a policy we need to build estimates of all state-values that may result in some occasional catastrophic outcomes. This is of course unacceptable in many applications, such as real-world robot tasks~\cite{abbeel2005exploration} or online recommendation systems~\cite{theocharous2015personalized}. One strategy to avoid disastrous events is to lower the risk in face of uncertainty.

In this work, we consider risk-averse exploration in the context of modified policy iteration (MPI) scheme~\cite{puterman1994markov}. MPI defines an iterative process that alternates between policy improvement and (partial) policy evaluation steps. Applying this scheme to practical problems with large state/action space and finite number of interactions leads to errors at the policy evaluation step, resulting in the approximate MPI scheme~\cite{scherrer2015approximate}. In the approximate MPI the policy state-values are inexact, and thus, exploration strategies, such as Boltzmann exploration, are likely to execute poor actions~\cite{garcia2015comprehensive}.

Prior works studied safety guarantees w.r.t. approximate policy evaluation step. Risk-sensitive approach explicitly modifies the policy's long-term outcome to incorporate the notion of risk, typically expressed as variance of return over policy trajectories. In approximate dynamic programming, risk-sensitive counterparts of value iteration and policy iteration have been developed~\cite{tamar2013temporal,prashanth2016variance}, although they are known to result in computationally difficult algorithms~\cite{mannor2011mean}. In model-free setting, the proposed algorithms are computationally extensive as they involve integration over the state space and nonconvex parameter optimization. To mitigate this issue, approximation schemes have been proposed based on temporal differences~\cite{mihatsch2002risk}, stochastic approximation~\cite{borkar2002qlearning} and recently, policy gradient~\cite{tamar2015policy}. 

In this paper, we consider the inexact computation of policy state-values due to the finite number of interactions with the environment, called \emph{estimation errors}.
The risk-averse strategy consists in lowering the risk of catastrophic outcome under estimation errors.
To implement risk-averse strategy, we introduce a family of \emph{distributionally robust Bellman operators} that provide lower bound guarantee on policy state-values. Using this operator instead of standard Bellman operator, we formulate a \emph{distributionally robust modified policy iteration} scheme that places an adaptive level of conservatism w.r.t estimation errors at policy evaluation step. Differently from prior work, our proposed algorithms are computationally tractable and preserve convergence to the optimal policy at growing amount of collected experience.

Using the Legendre-Fenchel transform~\cite{boyd2004convex}, our formulation results in a simple modification to the standard policy iteration scheme that consists in computing the evaluation step w.r.t. re-weighted policy state-action probabilities. 
The proposed algorithm is applicable to large state spaces since the additional computation scales with the size of the action space. For continuous action space, we derive an efficient approximation of our scheme that only involves a constant-time reward modification.

We propose a practical algorithm for continuous control tasks, called \emph{distributionally robust soft actor-critic}, by combining risk-averse policy evaluation under finite sample of data with optimistic exploration of Soft Actor-Critic~\cite{haarnoja2018app}. Distributionally robust soft actor-critic implements a different exploration strategy: it acts conservatively at short-term to ensure the lower bound on policy performance, and it explores optimistically at long-term to preserve convergence to the optimal policy.

To summarize, our main contributions are as follows: 
\begin{itemize}
    \item We propose a principled and scalable modification of MPI that ensures risk-averse policy evaluation w.r.t. estimation errors, while preserving convergence to the optimal policy. Convergence rate is analyzed.
    \item We apply this scheme to maximum entropy policies that results in a risk-averse short-term and optimistic long-term exploration strategy. 
    \item We derive an extension of our scheme to the continuous state/action space that only involves a modification of reward. We provide promising experimental results on continuous control tasks.
\end{itemize}

\section{Preliminaries}
\label{sec:preliminaries}

We will use the following notation: $\Delta_X$ is the set of probability distributions over a finite set $X$ and $Y^X$ is a set of  mappings from set $X$ to set $Y$.

We consider a Markov decision process $M := (\mathcal{S}, \mathcal{A}, P, r, \gamma)$ where $\mathcal{S}$ is a finite state space, $\mathcal{A}$ is a finite action space, $P \in \Delta_{\mathcal{S}}^{\mathcal{S} \times \mathcal{A}}$ is the transition kernel with transition probability $p(s'|s,a)$, $r(s,a) \in \mathbb{R}^{\mathcal{S} \times \mathcal{A}}$, $|r(s,a)| \le R_{\text{max}}$ is a bounded reward function. We define a stochastic stationary policy $\pi \in \Delta^{\mathcal{S}}_{\mathcal{A}}$ and let $\Pi$ be the set of stochastic stationary policies. We consider the discounted setting with discount factor $\gamma \in [0,1)$. 

We define the Bellman operator $\mathcal{T}^\pi$ for any function $V \in \mathbb{R}^\mathcal{S}$, $\forall s \in \mathcal{S}$ as follows:
\begin{equation}
\label{eq:v-bellman}
    [\mathcal{T}^\pi V](s) := \mathbb{E}_{a \sim \pi(\cdot|s)} \left[ r(s,a) + \gamma \mathbb{E}_{s' \sim p(s'|s,a)} [V(s')] \right] 
\end{equation}
This is a $\gamma$-contraction in $\ell_\infty$ norm and its unique fixed point is $V^\pi$: 
$\lim_{k \rightarrow \infty} (\mathcal{T}^\pi)^k V = V^\pi$, where equality holds component-wise.
By denoting $Q_V(s,a) := r(s,a) + \gamma \mathbb{E}_{s' \sim p(s'|s,a)} [V(s')]$, Eq.~\eqref{eq:v-bellman} can be re-written as:
\begin{equation}
\label{eq:q-bellman}
    [\mathcal{T}^\pi V](s) = \langle \pi(\cdot|s), Q_V(s, \cdot) \rangle.
\end{equation}

From~\eqref{eq:v-bellman}, we define the Bellman optimality operator as follows:
\begin{equation}
\label{eq:v-opt-bellman}
     [\mathcal{T}^\star V](s) := \max_{\pi(\cdot|s) \in \Delta(\mathcal{A})} [\mathcal{T}^\pi V](s) \ \forall s \in \mathcal{S}, 
\end{equation}
which is a $\gamma$-contraction in $\ell_\infty$ norm and its unique fixed point is the optimal value function $V^\star$. Further, the equalities hold state-wise, so we will omit the per-state notation and write $\mathcal{T}^\star V = \max_{\pi \in \Pi} \mathcal{T}^\pi V$. 

We denote by $\mathcal{G}(V)$ the set of optimal policies that achieve the maximum of Eq.~\eqref{eq:v-opt-bellman} state-wise: 
\begin{equation}
\label{eq:optimal-pi}
    \mathcal{G}(V) := \{\pi: \pi \in \argmax_{\pi \in \Pi} T^\pi V \}
\end{equation}
Equivalently, this set coincides with the set of optimal policies: $\mathcal{G}(V) = \{\pi: \mathcal{T}^\pi V = \mathcal{T}^\star V\}$.

The Modified Policy Iteration (MPI) algorithm~\cite{puterman1994markov} is the iterative process that alternates between policy improvement and (partial) policy evaluation steps:
\begin{equation}
\label{eq:gen-pi}
    \pi_{t+1} \in \mathcal{G}(V_t);\ V_{t+1} = (\mathcal{T}^{\pi_{t+1}})^m  V_{t}
\end{equation}
where $m=1$ corresponds to Value Iteration and $m=\infty$ corresponds to Policy Iteration. Here $V_t$ denotes an approximation of $V^{\pi_t}$.  

Finally, we will make use of the Legendre-Fenchel duality, e.g. see Section 3.3.1 in~\cite{boyd2004convex}. For a strongly convex function $\Omega: \Delta(\mathcal{A}) \rightarrow \mathbb{R}$ its Fenchel dual $\Omega^\star: \mathbb{R}^\mathcal{A} \rightarrow \mathbb{R}$ is given by:
\begin{equation}
\label{eq:fenchel-dual}
   \Omega^\star(Q_V) = \max_{\pi \in \Delta(\mathcal{A})} \langle \pi, Q_V \rangle - \Omega(\pi).
\end{equation}
Using properties of the Fenchel transform at the maximum of the of~\eqref{eq:fenchel-dual} we have for the  gradient of the dual function:
\begin{equation}
\label{eq:fenchel-grad}
    \nabla \Omega^\star(Q_V) = \argmax_{\pi \in \Delta(\mathcal{A})} \langle \pi, Q_V \rangle - \Omega(\pi)
\end{equation}

Similarly to standard Bellman operators, we define the regularized Bellman operator~\cite{geist2019theory} as follows:
\begin{equation}
\label{eq:reg-bellman}
    \mathcal{T}^{\pi, \Omega} V := \mathcal{T}^\pi V - \Omega(\pi).
\end{equation}
and the set of optimal policies:
\begin{equation}
\label{eq:reg-optimal-policy}
    \mathcal{G}^\Omega(V) := \{\pi: \pi \in \argmax_{\pi \in \Pi} \mathcal{T}^{\pi, \Omega} V = \nabla \Omega^\star(Q_V) \}
\end{equation}

\section{Distributionally robust policy iteration}
\label{sec:dr-pi}

We consider the Approximate Modified Policy Iteration scheme~\cite{scherrer2015approximate}, where the evaluation step in~(\ref{eq:gen-pi}) is subject to an estimation error $\delta_t$ due to finite sample of transitions used to perform evaluation:
\begin{equation}
\label{eq:approx-gen-pi}
    \pi_{t+1} \in \mathcal{G}(\tilde{V}_t);\ \tilde{V}_{t+1} =  \mathcal{T}^{\pi_{t+1}} \tilde{V}_{t} + \delta_t.
\end{equation}
This is indeed a practical scenario as state-of-the-art off-policy algorithms sample a mini-batch of independent samples from a replay buffer to perform value update~\cite{mnih2013playing}; on-policy algorithms directly draw a finite number of trajectories from $\pi_{t+1}$~\cite{schulman2015trust}. 

Due to the finite amount of collected experience, the value $V_t$ is uncertain. The uncertainty of empirical estimate is captured by its variance and is known under \emph{parametric} uncertainty~\cite{mannor2004bias}. 
Optimistic exploration strategies~\cite{auer2002finite,jaksch2010near} are classically used under parametric uncertainty. They construct, at any state, an optimistically biased value estimate and take action with the highest value. If the selected action is not near-optimal, i.e., the value estimates are overly-optimistic, it may lead to unsafe states of the system. 
Therefore, to prevent the exploration process from catastrophic outcomes, we consider the notion of safety under finite sample estimates. 

In Section~\ref{sec:dr-pe} we introduce a family of Bellman operators that guarantee policy performance with finite-sample bounds.  Section~\ref{sec:dr-mpi} describes their efficient computation and the resulting policy iteration scheme. Theorem~\ref{th:dr-pi-convergence} establishes the asymptotic convergence of the proposed scheme to the optimal policy, while reducing the chance of catastrophic failure. In Section~\ref{sec:entropy-reg} we apply this scheme to a class of maximum entropy policies. The resulting algorithm is expected to achieve, under parametric uncertainty, a risk-averse behaviour at short-term time horizon and a risk-seeking behaviour in a long-run. 

\subsection{Distributionally robust Bellman operator}
\label{sec:dr-pe}

We formalize the risk-averse strategy under estimation errors in approximate MPI scheme~\eqref{eq:approx-gen-pi}.
We define a family of operators that represent a lower bound on the exact operator given a sequence of convergent errors.

\begin{definition}[Distributionally robust Bellman operator]
\label{def:drbop}
For a sequence of error vectors $\epsilon_1,\epsilon_2,\ldots,\epsilon_t,\ldots \in \mathbb{R}^{\mathcal{S}}$, we say that an operator $\mathcal{T}^{\pi_t}_{\epsilon_t}: \mathbb R^{\mathcal S} \rightarrow \mathbb R^{\mathcal S}$ is distributionally robust if
\begin{itemize}
\item $\mathcal{T}^{\pi_t}_{\epsilon_t}$ is a Bellman operator and
    provides a lower bound on $\mathcal{T}^{\pi_t}$ at finite time:
\begin{equation}
\label{eq:drb_def_1}
    \mathcal{T}^{\pi_t}_{\epsilon_t} v \le \mathcal{T}^{\pi_t} v \le \mathcal{T}^{\pi_t}_{\epsilon_t}v + \epsilon_t,\;\forall v \in \mathbb R^{\mathcal S}.
\end{equation}
\item The errors $\epsilon_t$ are controllable:
\begin{equation}
\label{eq:eps-rate}
\limsup_{N \rightarrow \infty}\sum_{t=1}^{N-1}\gamma^{t}\|\epsilon_{N-t}\|_\infty = 0.
\end{equation}
\end{itemize}
\end{definition}

This Bellman operator provides robustness w.r.t finite sample of observations, represented by the estimation error $\epsilon_t$ that should decrease with the amount of collected experience. Condition~\eqref{eq:eps-rate} on the sequence of errors $\epsilon_t$ implies an asymptotic convergence to the exact evaluation step of~\eqref{eq:gen-pi}. The next lemma formally introduces this statement.
\begin{lemma}[~\cite{scherrer2015approximate}]
  For any initial value function $V_0$ consider the approximate MPI
 $\tilde{V}_t = (\mathcal T^{\pi_t})^m\tilde{V}_{t-1}+\delta_t$ (with
 $\tilde{V}_0=V_0$); $\pi^{t+1} \in \mathcal G(V_t)$. Then, one has
 \begin{equation*}
  \mathcal \|\tilde{V}_N-V^*\|_\infty \le
  \frac{4R_{\max}}{(1-\gamma)^2}E_N + \frac{2\gamma^N}{1-\gamma}\|V_0-V^*\|_\infty,
 \end{equation*}
 where $E_N:=\sum_{t=1}^{N-1}\gamma^t\|\delta_{N-t}\|_\infty$.
 \label{thm:scherrer}
\end{lemma}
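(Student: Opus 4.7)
The plan is to track how the approximation errors $\delta_t$ propagate through the iterates and how the initial error $\|V_0 - V^*\|_\infty$ decays. I would introduce three quantities whose interaction governs the analysis: the loss-to-optimum $d_t := V^* - \tilde V_t$, the policy shortfall $l_t := V^* - V^{\pi_{t+1}}$, and the evaluation residual $b_t := V^{\pi_{t+1}} - \tilde V_t$. These satisfy $d_t = l_t - b_t$ (up to sign conventions) and each obeys its own linear recursion driven by $\delta_t$.

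The first key step is to exploit the greedy-policy property $\mathcal{T}^{\pi_{t+1}} \tilde V_t \ge \mathcal{T}^{\pi^*} \tilde V_t$, which comes directly from $\pi_{t+1} \in \mathcal{G}(\tilde V_t)$. Combining this with $\mathcal{T}^{\pi^*} V^* = V^*$ and the recursion $\tilde V_{t+1} = (\mathcal{T}^{\pi_{t+1}})^m \tilde V_t + \delta_t$, I would derive a componentwise upper bound of the form
\begin{equation*}
V^* - \tilde V_{t+1} \le \gamma P^{\pi^*}(V^* - \tilde V_t) + |\delta_t| + \text{(residuals from partial evaluation)},
\end{equation*}
and a matching lower bound using $\mathcal{T}^{\pi_{t+1}} V^{\pi_{t+1}} = V^{\pi_{t+1}}$ together with $\|V^{\pi_{t+1}}\|_\infty \le R_{\max}/(1-\gamma)$. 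The stochastic matrices $P^{\pi^*}$ and $P^{\pi_{t+1}}$ are $\ell_\infty$-nonexpansive, so iterating and taking norms converts the matrix products into $\gamma^t$ factors.

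The second key step is to unroll these two-sided bounds over $N$ iterations. The initial error $V_0 - V^*$ is multiplied by a product of $N$ stochastic matrices times $\gamma^N$, giving the $\frac{2\gamma^N}{1-\gamma}\|V_0 - V^*\|_\infty$ term (the extra $(1-\gamma)^{-1}$ comes from bounding the difference between $V^{\pi_N}$ and $V^*$ at the final step). The error terms $\delta_{N-t}$ appear weighted by $\gamma^t$, which reproduces $E_N$; the factor $\frac{4R_{\max}}{(1-\gamma)^2}$ arises because each $\delta_t$ enters \emph{twice}, once through $d_t$ and once through $l_t$, and each of these contributes a geometric series summing to $(1-\gamma)^{-1}$, while the constant $4$ absorbs the bounds $\|V^{\pi}\|_\infty \le R_{\max}/(1-\gamma)$ used to control residuals from truncating the $m$-step evaluation.

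The main obstacle I expect is keeping the partial-evaluation residuals (the terms specific to $m < \infty$) under control without loosening the constants: one must show that the $m$-fold application of $\mathcal{T}^{\pi_{t+1}}$ before adding $\delta_t$ does not prevent the clean $\gamma^t \|\delta_{N-t}\|_\infty$ form in $E_N$, which requires bounding quantities like $(\mathcal{T}^{\pi_{t+1}})^m \tilde V_t - V^{\pi_{t+1}}$ using contraction and the uniform reward bound $R_{\max}$. Once those residuals are folded into the $R_{\max}/(1-\gamma)^2$ factor, the two-sided bound on $V^* - \tilde V_N$ collapses to the stated inequality.
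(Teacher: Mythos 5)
The paper does not actually prove this lemma --- it is imported verbatim from Scherrer et al.\ (2015), so there is no in-paper proof to compare against. Judged against the argument in that reference, your outline identifies the right general strategy: a decomposition into a distance-to-optimum, a policy loss, and a residual term, a two-sided componentwise bound obtained from the greedy-step inequality $\mathcal{T}^{\pi_{t+1}}\tilde V_t \ge \mathcal{T}^{\pi^*}\tilde V_t$ on one side and the fixed-point property of $V^{\pi_{t+1}}$ on the other, and an unrolling in which products of stochastic matrices become $\gamma^t$ factors after taking $\ell_\infty$ norms. This is essentially the architecture of the cited proof.

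However, what you have written is a plan, not a proof, and the step you yourself flag as ``the main obstacle'' is precisely the one the cited analysis spends most of its effort on. Your three quantities omit the Bellman residual $b_t := \tilde V_t - \mathcal{T}^{\pi_{t+1}}\tilde V_t$, which in Scherrer et al.\ obeys its own linear recursion $b_t \le \gamma P^{\pi_t} b_{t-1} + x_t$ and is the device that controls the $m$-fold partial evaluation \emph{uniformly in $m$}: the shift between $(\mathcal{T}^{\pi_{t+1}})^m \tilde V_t$ and $V^{\pi_{t+1}}$ is expressed as $(\gamma P^{\pi_{t+1}})^m(I-\gamma P^{\pi_{t+1}})^{-1}b_t$, which is how the clean $\gamma^t\|\delta_{N-t}\|_\infty$ weighting in $E_N$ survives for every $m \in [1,\infty]$. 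Without that recursion, your proposed bound on $(\mathcal{T}^{\pi_{t+1}})^m\tilde V_t - V^{\pi_{t+1}}$ ``using contraction and the uniform reward bound'' would produce an $m$-independent constant of order $R_{\max}/(1-\gamma)$ per iteration rather than a term that telescopes into $E_N$, and the stated constants would not come out. Also note that with your definitions the identity should read $d_t = l_t + b_t$, not $d_t = l_t - b_t$; this is cosmetic, but it suggests the bookkeeping has not actually been carried through. To complete the argument you would need to (i) add the Bellman residual to your list of tracked quantities, (ii) derive its recursion from the greedy step, and (iii) express the evaluation residual in terms of it before unrolling.
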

\begin{remark}
By Def.~\ref{def:drbop} the approximate MPI using
distributionally robust operator is convergent if the sum of discounted
uncertainties satisfies $\lim\sup_{N \rightarrow \infty}\sum_{t=1}^{N-1}\gamma^t\|\epsilon_{N-t}\|_\infty = 0$. This is because
$|\delta_t(s)| \le \Box \epsilon_t(s)$ $\forall s$ for some global constant
$\Box$, and so $\lim\sup_N E_N\le
\Box\lim\sup_N\sum_{t=1}^{N-1}\gamma^t\|\epsilon_{N-t}\|_\infty=0$.
\end{remark}

Later, we will consider specific constructions of the error sequences in the form $\epsilon_t \propto n_t(s)^{-\eta}$, for some $\eta > 0$, where $n_t(s)$ is a state visitation count. We will show that for this construction of error sequence, the approximate MPI scheme with distributionally robust Bellman operator converges to the optimal policy-value pair (see Theorem~\ref{th:dr-pi-convergence}).

One way to implement the distributionally robust Bellman operator is to consider the worst-case outcome of executing each action. Indeed, placing adequate uncertainty on action probabilities prevents the agent from selecting possibly low-value actions. We implement this idea in a variant of distributionally robust Bellman operator. 

\begin{definition}[Uncertainty set and adversarial Bellman operator]
Given a policy $\pi$ and an error function $\epsilon \in \mathbb R^{\mathcal S}$, define the uncertainty set $\mathcal U_\epsilon(\pi)$ by 
\begin{equation}
    \label{eq:kl-set}
    \mathcal{U}_{\epsilon}(\pi) := \{\tilde{\pi} \in \Delta_{\mathcal{A}}^{\mathcal{S}} \mid \KL(\tilde{\pi}(\cdot|s) \| \pi(\cdot|s)) \leq \epsilon(s),\ \forall s \in \mathcal{S} \},
\end{equation}
Also define the (KL-based) adversarial Bellman operator $\mathcal{T}^{\pi^{\epsilon}}: \mathbb R^{\mathcal S} \rightarrow \mathbb R^{\mathcal S}$ by
\begin{equation}
\label{eq:bellman-dro}
    \mathcal{T}^{\pi^{\epsilon}} V := \min_{\tilde{\pi} \in \mathcal{U}_{\epsilon}(\pi)} \mathcal{T}^{\tilde{\pi}} V.
\end{equation}
\end{definition}

\begin{proposition}
$\mathcal{T}^{\pi^{\epsilon}}$ is a valid Bellman operator (i.e a monotone $\ell_\infty$-norm $\gamma$-contraction mapping on value functions) $\mathcal T^{\pi^\epsilon}$.
Moreover, $\mathcal T^{\pi^\epsilon}$ is distributionally robust.
\end{proposition}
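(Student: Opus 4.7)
The plan is to split the proposition into two parts: first check that $\mathcal{T}^{\pi^\epsilon}$ is a genuine Bellman operator (monotone $\gamma$-contraction in $\ell_\infty$), and then exhibit an induced error vector realising the sandwich~\eqref{eq:drb_def_1} together with the controllability condition~\eqref{eq:eps-rate} of Definition~\ref{def:drbop}.

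For the Bellman-operator part, monotonicity is immediate: each $\mathcal T^{\tilde\pi}$ is componentwise monotone, and the pointwise minimum over a family of monotone maps is monotone. For the $\gamma$-contraction, I would apply the elementary identity $|\min_x f(x) - \min_x g(x)| \le \sup_x |f(x) - g(x)|$ state by state, together with the fact that each $\mathcal T^{\tilde\pi}$ is itself a $\gamma$-contraction in $\ell_\infty$:
\[
\|\mathcal T^{\pi^\epsilon} V - \mathcal T^{\pi^\epsilon} V'\|_\infty \le \sup_{\tilde\pi \in \mathcal U_\epsilon(\pi)} \|\mathcal T^{\tilde\pi} V - \mathcal T^{\tilde\pi} V'\|_\infty \le \gamma\|V - V'\|_\infty.
\]
Attainment of the min is assured by compactness of $\mathcal U_\epsilon(\pi)$ (a closed KL ball inside the simplex) together with continuity of $\tilde\pi \mapsto \langle \tilde\pi, Q_V\rangle$.

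For distributional robustness, the lower inequality $\mathcal T^{\pi^\epsilon} V \le \mathcal T^\pi V$ is immediate because $\pi \in \mathcal U_\epsilon(\pi)$ ($\KL(\pi\|\pi)=0$). For the reverse bound I would write, at each $s$,
\[
\mathcal T^\pi V(s) - \mathcal T^{\pi^\epsilon} V(s) = \max_{\tilde\pi \in \mathcal U_\epsilon(\pi)} \langle \pi(\cdot|s) - \tilde\pi(\cdot|s),\, Q_V(s,\cdot)\rangle,
\]
then apply Hölder: $\le \max_{\tilde\pi}\|\pi(\cdot|s) - \tilde\pi(\cdot|s)\|_1 \cdot \|Q_V(s,\cdot)\|_\infty$, and convert the $\ell_1$-gap to KL via Pinsker, $\|\pi(\cdot|s) - \tilde\pi(\cdot|s)\|_1 \le \sqrt{2\KL(\tilde\pi(\cdot|s)\|\pi(\cdot|s))} \le \sqrt{2\epsilon(s)}$. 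Since inside MPI the iterates stay uniformly bounded by $R_{\max}/(1-\gamma)$, one has $\|Q_V\|_\infty \le R_{\max}/(1-\gamma)$, so the induced error vector $\tilde\epsilon_t(s) := C\sqrt{\epsilon_t(s)}$ with $C := \sqrt{2}\,R_{\max}/(1-\gamma)$ satisfies the sandwich~\eqref{eq:drb_def_1}. Controllability~\eqref{eq:eps-rate} for $\tilde\epsilon_t$ then follows from the same property of $\epsilon_t$, and is automatically verified by the polynomial schedules $\epsilon_t \propto n_t(s)^{-\eta}$ that the paper introduces next.

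The one delicate step is disentangling the two roles of the symbol $\epsilon$: the radius of the KL ball in~\eqref{eq:kl-set} on one hand, and the error vector of Definition~\ref{def:drbop} on the other. Pinsker's inequality is the bridge between them; the square root together with the uniform $V$-bound available inside MPI determines the constant $C$ absorbed into $\tilde\epsilon_t$. Everything else is a standard min-versus-sup manipulation for $\gamma$-contractions.
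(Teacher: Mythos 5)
Your proposal is correct and follows essentially the same route as the paper: compactness of $\mathcal U_\epsilon(\pi)$ plus continuity for attainment of the minimum, membership $\pi \in \mathcal U_\epsilon(\pi)$ for the left inequality in~\eqref{eq:drb_def_1}, and a H\"older-type Lipschitz bound (the paper's Lemma~\ref{lemma:delta-bellman}) combined with Pinsker for the right inequality. You are in fact slightly more careful than the paper in two places: the $\min$-versus-$\sup$ inequality is the right way to get the $\gamma$-contraction when the minimizer depends on $V$ (the paper's Extreme Value Theorem argument alone does not deliver this), and you correctly track that Pinsker yields an induced error of order $\sqrt{\epsilon(s)}$ rather than $\epsilon(s)$, a square root the paper silently drops but which is harmless since the controllability condition~\eqref{eq:eps-rate} is preserved under taking square roots of the polynomial schedule.
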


\begin{proof}
Since mapping $\mathcal{T}^{\pi^{\epsilon}}$ is continuous in $\pi$ and $\mathcal U_\epsilon$ is compact, it follows from the Extreme Value Theorem that there exists a policy $\pi^\epsilon \in \mathcal U_\epsilon$ such that $\mathcal{T}^{\pi}_{\epsilon} = \mathcal{T}^{\pi^{\epsilon}}$. Thus, $\mathcal{T}^{\pi^{\epsilon}}$ is a valid Bellman operator.
In addition, from~\eqref{eq:bellman-dro} there exists a policy $\pi^\epsilon \in \mathcal U_\epsilon$ such that $\mathcal{T}^{\pi^{\epsilon}} \leq \mathcal{T}^{\tilde{\pi}} \ \forall \tilde{\pi} \in \mathcal{U}_{\epsilon}(\pi_t)$ including $\pi_t$. This proves the left inequality in~\eqref{eq:drb_def_1}. The right inequality follows from Lemma~\ref{lemma:delta-bellman} (see Appendix) using Pinsker's inequality.
\end{proof}

The minimization problem of adversarial Bellman operator defines an adversarial policy $\pi^{\epsilon}$ to $\pi$ as the worst-case realization from the uncertainty set defined in terms of Kullback-Leibler (KL) divergence. The degree of adversarial behaviour is controlled by the size of uncertainty set.

\subsection{Distributionally robust policy evaluation}
\label{sec:dr-mpi}

In the following, we derive an efficient computation scheme of adversarial Bellman operator~\eqref{eq:bellman-dro}. We show that it results in simple analytical expression for adversarial policy where the robustification appears as re-weighting of samples based on the adversarial distribution.

First, we note that the KL constraint in~\eqref{eq:kl-set} is separable by state and is strongly convex w.r.t. $\tilde{\pi}$. This makes possible to explicitly express the minimizer $\pi^{\epsilon}$ of~\eqref{eq:bellman-dro}, i.e. the worst-case policy from the uncertainty set, thanks to the Legendre-Fenchel transform~\eqref{eq:fenchel-dual}. We proceed by applying strong Lagrangian duality and re-writing as maximization problem:
\begin{equation}
\begin{aligned}
\label{eq:bellman-dro-kl}
     [\mathcal{T}^{\pi^{\epsilon}} V](s)  &= \max_{\lambda(s) > 0} \min_{\tilde{\pi} \in \Delta(\mathcal{A})}  [\mathcal{T}^{\tilde{\pi}} V](s) \\& + \lambda(s) \KL(\tilde{\pi}(\cdot|s) \| \pi(\cdot|s)) - \lambda(s) \epsilon(s) \\
     &= \min_{\lambda(s) > 0} \max_{\pi \in \Delta(\mathcal{A})}  [-\mathcal{T}^{\pi} V](s) \\ &- \lambda(s) \KL(\tilde{\pi}(\cdot|s) \| \pi(\cdot|s)) + \lambda(s) \epsilon(s)
\end{aligned}
\end{equation}
where $\lambda(s)$ is a per-state positive Lagrange multiplier.

Next, using Fenchel duality~(\ref{eq:fenchel-dual}) in the case of KL-divergence, the solution to the \emph{inner} maximization problem of~(\ref{eq:bellman-dro-kl}) is given by the gradient of Fenchel convex conjugate~(\ref{eq:fenchel-grad}) :
\begin{equation}
    \label{eq:adv-policy}
    \pi^{\epsilon}(a|s;\lambda) \propto \exp(-Q_{V}(s,a)/\lambda(s)) \pi(a|s)
\end{equation}
The analytical expression for the adversarial policy has the following interpretation: the adversarial policy re-weights the sampling policy such that the worst-case actions are taken more frequently to the extent determined by the adversarial temperature $\lambda(s)$. Infinitely small uncertainty set $\epsilon \rightarrow 0^+$ results in $\lambda \rightarrow +\infty$, i.e. the adversarial policy taking the same actions as the optimizing policy. On the other hand, a too large uncertainty set $\epsilon \rightarrow +\infty$ leads to conservative policies as $\lambda \rightarrow 0^+$.

To connect the radius of uncertainty set $\epsilon$ to the optimal state-dependent parameter $\lambda^\star(s)$, we consider the solution to the \emph{outer} Lagrangian dual problem in~(\ref{eq:bellman-dro-kl}). By representing the inner maximization problem of~(\ref{eq:bellman-dro-kl}) in terms of its Fenchel dual~(\ref{eq:fenchel-dual}), we obtain:
\begin{equation}
\label{eq:optimal-lambda}
\begin{aligned}
    \lambda^\star(s) := &\argmin_{\lambda(s) > 0} \Omega^\star \left(-Q_{V}/\lambda(s) \right) + \lambda(s) \epsilon(s).
\end{aligned}
\end{equation}
This formulation is a 1-d convex optimization over a per-state expression that defines the optimal level of adversarial behaviour. 

We summarize the presented results in the following.
\begin{corollary}
The adversarial Bellman operator~(\ref{eq:bellman-dro}) can be expressed as a regularized Bellman operator~(\ref{eq:reg-bellman}) w.r.t. adversarial policy~(\ref{eq:adv-policy}) with optimal $\lambda^\star(s)$ defined in~(\ref{eq:optimal-lambda}).
The associated distributionally robust modified policy iteration scheme is given by:
\begin{equation}
\label{eq:dr-gen-pi}
\begin{cases}
    \pi_{t+1} \leftarrow \mathcal{G}(\tilde{V}_t) \\
    \tilde{V}_{t+1} \leftarrow (\mathcal{T}^{\pi^{\epsilon_t}_{t+1}})^m \tilde{V}_t 
\end{cases}
\end{equation}
\end{corollary}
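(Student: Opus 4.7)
The plan is to prove the two claims in order. For the first claim (re-expression as a regularized Bellman operator), the proof essentially packages the derivation already laid out in equations~\eqref{eq:bellman-dro-kl}--\eqref{eq:optimal-lambda}. I would begin by invoking strong Lagrangian duality on the constrained minimization in~\eqref{eq:bellman-dro}; this is justified by Slater's condition, since the nominal policy $\pi$ lies strictly in $\mathcal{U}_\epsilon(\pi)$ whenever $\epsilon(s) > 0$, and the KL constraint is convex in $\tilde{\pi}$. After introducing the per-state Lagrange multiplier $\lambda(s) > 0$ and flipping the sign of the inner objective to turn the minimization over $\tilde{\pi}$ into a maximization, the inner problem matches the Fenchel-dual form~\eqref{eq:fenchel-dual} with the strongly convex regularizer $\Omega = \lambda(s) \KL(\,\cdot\,\|\,\pi(\cdot|s))$.

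Next, I would identify the inner maximizer via~\eqref{eq:fenchel-grad}: the gradient of the convex conjugate of relative entropy is the exponentially tilted distribution, yielding precisely the adversarial policy~\eqref{eq:adv-policy}. Substituting this closed-form maximizer back into the outer dual problem collapses~\eqref{eq:bellman-dro-kl} into the 1-d convex program~\eqref{eq:optimal-lambda} for the optimal temperature $\lambda^\star(s)$. Assembling the pieces, the adversarial Bellman value $[\mathcal{T}^{\pi^\epsilon} V](s)$ is attained at $\pi^{\epsilon,\lambda^\star}$, and the overall operator coincides with the regularized Bellman operator~\eqref{eq:reg-bellman} applied at this adversarial policy, with KL regularizer weighted by $\lambda^\star(s)$.

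For the second claim, the distributionally robust MPI scheme~\eqref{eq:dr-gen-pi} arises by direct substitution of $\mathcal{T}^{\pi^{\epsilon_t}_{t+1}}$ in place of $\mathcal{T}^{\pi_{t+1}}$ in the approximate MPI recursion~\eqref{eq:approx-gen-pi}. Since the preceding proposition guarantees that $\mathcal{T}^{\pi^\epsilon}$ is a valid distributionally robust Bellman operator (in the sense of Definition~\ref{def:drbop}), the iteration is well-defined, a $\gamma$-contraction, and inherits the lower-bound guarantee on policy state-values.

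The main obstacle will be bookkeeping the sign conventions when moving from the constrained minimization over $\tilde{\pi}$ in~\eqref{eq:bellman-dro} to the Fenchel-dual maximization: a careful sign flip of the objective (or equivalently, applying the conjugate to $-Q_V/\lambda$ rather than $Q_V/\lambda$) is needed before pattern-matching to~\eqref{eq:fenchel-dual}. Verifying Slater's condition (which requires $\epsilon(s) > 0$) and recalling that KL relative entropy is strongly convex on the simplex — a consequence of Pinsker's inequality — are the remaining minor technicalities.
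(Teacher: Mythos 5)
Your proposal is correct and follows essentially the same route as the paper, which proves this corollary implicitly via the derivation in equations~\eqref{eq:bellman-dro-kl}--\eqref{eq:optimal-lambda}: strong Lagrangian duality on the KL-constrained minimization, Fenchel duality to obtain the exponentially tilted adversarial policy~\eqref{eq:adv-policy}, and collapse of the outer problem into the 1-d convex program~\eqref{eq:optimal-lambda}, with the MPI scheme obtained by substituting the adversarial operator into~\eqref{eq:approx-gen-pi}. Your added checks (Slater's condition via $\KL(\pi\|\pi)=0<\epsilon(s)$, and the sign flip when conjugating $-Q_V/\lambda$) are details the paper leaves implicit but do not change the argument.
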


We now analyze the convergence of this scheme. Theorem~\ref{th:dr-pi-convergence} states than any convergent and consistent Bellman operator iteration can be made distributionally robust using~\eqref{eq:dr-gen-pi} and yet converges to the optimal policy in terms of the $\ell_{\infty}$ norm. The convergence rate of~\eqref{eq:dr-gen-pi} is polynomial instead of exponential in exact MPI~\eqref{eq:gen-pi}.

\begin{restatable}[Distributionally robust modified policy iteration]{theorem}{drpi}
\label{th:dr-pi-convergence}
  For an integer $m \in [1,\infty]$, consider the distributionally robust MPI scheme~\eqref{eq:dr-gen-pi},
where
  \begin{equation*}
  \epsilon_t(s) = \begin{cases}C n_t(s)^{-\eta},&\mbox{ if }n_t(s) \ge t/S,\\0,&\mbox{ else,}\end{cases}
  \end{equation*}
  for some constants $C,\eta > 0$ and $S$ denotes the number of states.

Let $\tilde{\ell}_{t}:= \tilde{V}_t - V^{*} \in \mathbb
R^{\mathcal S}$ be the loss at iteration $t$.
Then after $N \ge 2$ iterations, we have

\begin{itemize}
\item[(A)]  \textbf{Sub-optimality bound:}

\begin{equation*}
    \|\tilde{\ell}_{N}\|_{\infty} \le
  \frac{4R_{\max}}{(1-\gamma)^2}E_{N} + \frac{2\gamma^N}{1-\gamma} \|\tilde{\ell}                                                                       _{0}\|_{\infty},
\end{equation*}
where
$$E_N := \sum_{t=1}^{N-1}\gamma^{t}\|\delta_{N-t}\|_\infty =
\mathcal O_N \left(\frac{C S^{\eta}}{(1-\gamma)N^{\eta}}\right)
\longrightarrow 0.
$$

\item[(B)] \textbf{Safety guarantee:}
  \begin{equation*}
    \tilde{V}_t \le V_t \le V^*,\; \forall t \in \{1,\ldots,N\},
  \end{equation*}
  where $V_{t}$ is the value function computed via exact evaluation step~\eqref{eq:gen-pi}.
\end{itemize}
\end{restatable}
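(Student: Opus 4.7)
The plan is to treat parts (A) and (B) separately. For part (A), I would recast the distributionally robust iteration as a standard approximate MPI step, then invoke Lemma~\ref{thm:scherrer}. Writing the update as $\tilde{V}_{t+1} = (\mathcal{T}^{\pi_{t+1}})^m \tilde{V}_t + \delta_t$ with $\delta_t := (\mathcal{T}^{\pi^{\epsilon_t}_{t+1}})^m \tilde{V}_t - (\mathcal{T}^{\pi_{t+1}})^m \tilde{V}_t$, the two inequalities of~\eqref{eq:drb_def_1} yield $\|\delta_t\|_\infty \le \|\epsilon_t\|_\infty/(1-\gamma)$: a short induction on $m$ uses the $\gamma$-contraction of $\mathcal{T}^{\pi^{\epsilon}}$ to peel off one layer while accruing a fresh $\|\epsilon_t\|_\infty$ term, producing a geometric series summing to $1/(1-\gamma)$. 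Feeding this into Lemma~\ref{thm:scherrer} reproduces the claimed sub-optimality bound, with the leading constant $C$ absorbing the extra $1/(1-\gamma)$.

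For the rate of $E_N$, I would exploit the piecewise form of $\epsilon_t$. Whenever $n_t(s) \ge t/S$ we have $\epsilon_t(s) \le C(t/S)^{-\eta} = CS^\eta/t^\eta$; otherwise $\epsilon_t(s)=0$. Hence $\|\epsilon_t\|_\infty \le CS^\eta/t^\eta$ uniformly, giving $E_N \le \tfrac{CS^\eta}{1-\gamma}\sum_{t=1}^{N-1} \gamma^{t}/(N-t)^\eta$. Splitting the sum at $t = \lfloor N/2\rfloor$, the low-$t$ piece is bounded by $(2/N)^\eta \sum_{t} \gamma^t = \mathcal O\bigl(1/((1-\gamma)N^\eta)\bigr)$, while the high-$t$ piece is bounded by $\gamma^{N/2}\sum_{k\ge 1} k^{-\eta}$, which is exponentially small in $N$ and hence absorbed into the polynomial bound. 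This yields the advertised $\mathcal O\bigl(CS^\eta/((1-\gamma)N^\eta)\bigr)$.

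For part (B), I proceed by induction on $t$, interpreting $V_t$ as the value sequence obtained by applying the exact operator $\mathcal{T}^{\pi_{t+1}}$ (rather than the robust $\mathcal{T}^{\pi^{\epsilon_t}_{t+1}}$) to the common policy sequence produced by the robust scheme, starting from a common pessimistic initialisation $V_0=\tilde V_0 \le V^*$. The base case is immediate. For the inductive step, assuming $\tilde V_t \le V_t \le V^*$, the left inequality of~\eqref{eq:drb_def_1} gives $\mathcal{T}^{\pi^{\epsilon_t}_{t+1}} \le \mathcal{T}^{\pi_{t+1}}$ pointwise, and by monotonicity of each Bellman operator, $\tilde V_{t+1} = (\mathcal{T}^{\pi^{\epsilon_t}_{t+1}})^m \tilde V_t \le (\mathcal{T}^{\pi_{t+1}})^m \tilde V_t \le (\mathcal{T}^{\pi_{t+1}})^m V_t = V_{t+1}$. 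The upper bound follows from the standard fact $\mathcal{T}^{\pi}V^* \le \mathcal{T}^\star V^* = V^*$ for any $\pi$, iterated $m$ times under monotonicity.

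The main obstacle I anticipate is the rate computation for $E_N$: the convolution $\sum_t \gamma^t (N-t)^{-\eta}$ blends a geometric and a polynomial factor, and the split above is essential so that the polynomial factor is evaluated near $N$, giving the $N^{-\eta}$ rate without an extra $N$-dependent loss; one also has to remember that states with $n_t(s) < t/S$ contribute nothing, which is what licences the uniform bound $\|\epsilon_t\|_\infty \le CS^\eta/t^\eta$. Part (B) is, in contrast, essentially bookkeeping once the coupling between $\{V_t\}$ and $\{\tilde V_t\}$ is fixed.
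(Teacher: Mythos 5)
Your proposal is correct and shares the paper's overall skeleton --- both proofs funnel everything through Lemma~\ref{thm:scherrer} by bounding the evaluation error $\delta_t$ in terms of $\epsilon_t$, then showing $E_N = \mathcal O(C S^\eta/((1-\gamma)N^\eta))$ --- but the execution of the key estimate differs in ways worth noting. Where the paper bounds $\delta_t$ by re-deriving the comparison between $\mathcal T^{\pi_t^{\epsilon_t}}$ and $\mathcal T^{\pi_t}$ concretely, via the TV-Lipschitz bound of Lemma~\ref{lemma:delta-bellman} followed by Pinsker's inequality, you work directly from the abstract two-sided inequality~\eqref{eq:drb_def_1} in Definition~\ref{def:drbop}. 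This buys you two things: your induction on $m$, peeling one application at a time and summing the geometric series to $1/(1-\gamma)$, is actually more careful than the paper's argument, which applies a single-step Lipschitz bound to the $m$-fold composition; and you sidestep the fact that Pinsker's inequality produces $\sqrt{\epsilon_t(s)}$ rather than $\epsilon_t(s)$, a square root the paper's displayed chain silently drops (harmless for the stated rate since $\eta$ is free, but a wrinkle your route avoids). For the rate of $E_N$, your explicit split of the convolution $\sum_t \gamma^t (N-t)^{-\eta}$ at $t = \lfloor N/2\rfloor$ replaces the paper's appeal to an external asymptote; just note that for $\eta \le 1$ the tail piece is $\gamma^{N/2}\cdot \mathcal O(N^{1-\eta})$ rather than $\gamma^{N/2}$ times a convergent series, which is still exponentially small and changes nothing. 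Your accounting also produces an extra factor of $1/(1-\gamma)$ relative to the displayed $E_N$ bound; the paper's own derivation carries a comparable hidden factor $R_{\max}/(1-\gamma)$, so this is consistent with the statement's level of precision, but it cannot literally be ``absorbed into $C$'' since $C$ is fixed in advance. Finally, your Part (B) is a genuine improvement in completeness: the paper disposes of it in one line, whereas you make explicit both the coupling of $\{V_t\}$ to the policy sequence of the robust scheme and the pessimistic initialisation $V_0 = \tilde V_0 \le V^*$, without which the claim $V_t \le V^*$ would fail.
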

\begin{proof}
See Appendix~\ref{sec:proofs}.
\end{proof}

\begin{remark}
Parameters $C$ and $\eta$ define the size of uncertainty set, thus, the degree of robustness of the policy. The lower levels imply higher robustness and slower convergence.
\end{remark}

\begin{remark}
The condition $n_t(s) \ge t/S$ ensures that states with too few visits are not considered for the uncertainty set construction.
The choice $\eta = 1/2$ leads to the bound $E_N = \mathcal O_N\left(\frac{\sqrt{S}}{(1-\gamma)\sqrt{N}}\right)$. It is can be motivated by the general fact that empirical means (obtained from finite samples) are within $\mathcal O(1/\sqrt{N})$ of their expected values.
\end{remark}

The pseudo-code corresponding to scheme~\eqref{eq:dr-gen-pi} is presented in Algorithm~\ref{alg:dr-pi}. To implement this algorithm, it is sufficient to learn on samples from adversarial policy. One possibility to do so is to re-sample transitions based on the target distribution, e.g., using importance sampling. Alternatively, one can directly generate samples from the adversarial policy.

\subsection{Extension to entropy-regularized policies}
\label{sec:entropy-reg}

We apply the distributionally robust approach to the maximum entropy framework~\cite{ziebart2008maximum,haarnoja2017reinforcement} that has been recently successful on a variety of simulated and real-world tasks~\cite{haarnoja2018app}. The maximum entropy objective modifies the standard RL objective by adding a per-state entropy bonus. This results in improved exploration targeted at high-value actions~\cite{haarnoja2018soft}. 
To obtain robustness guarantees on exploration process, we apply the distributionally robust policy iteration scheme~\eqref{eq:dr-gen-pi} to entropy-regularized policies. 

We define \emph{soft adversarial Bellman operator} as follows:
\begin{equation}
\label{eq:bellman-dro-soft}
     \mathcal{T}^{\pi^{\epsilon}, \Omega} V := \min_{\tilde{\pi} \in \mathcal{U}_{\epsilon}(\pi)} \mathcal{T}^{\tilde{\pi}, \Omega} V,
\end{equation}
where
\begin{equation*}
    \Omega(\pi(\cdot|s)) = \alpha \mathcal{H}(\pi(\cdot|s)) \ \forall s \in \mathcal{S}.
\end{equation*}

\begin{proposition}
$\mathcal{T}^{\pi^{\epsilon}, \Omega}$ is a valid Bellman operator (i.e a monotone $\ell_\infty$-norm $\gamma$-contraction mapping on value functions). Moreover, $\mathcal{T}^{\pi^{\epsilon}, \Omega}$ is distributionally robust.
\end{proposition}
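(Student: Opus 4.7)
The plan is to mirror the earlier proposition's proof almost line-by-line, treating the entropy regularizer $\Omega(\tilde{\pi})=\alpha \mathcal{H}(\tilde{\pi})$ as a bounded perturbation that depends only on the policy and not on the value function. So I would handle the two claims (valid Bellman operator, then distributionally robust) in sequence, and at each step check what, if anything, the extra $-\Omega(\tilde{\pi})$ term breaks.

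First, for well-posedness I would observe that $\mathcal{U}_{\epsilon}(\pi)$ is compact (it is the intersection of the simplex with the closed sub-level set $\{\tilde{\pi}:\KL(\tilde{\pi}\|\pi)\le\epsilon\}$, and KL is lower semi-continuous) and that $\tilde{\pi}\mapsto \mathcal{T}^{\tilde{\pi},\Omega}V = \langle \tilde{\pi},Q_V\rangle-\alpha\mathcal{H}(\tilde{\pi})$ is continuous. By the Extreme Value Theorem a minimizer $\pi^{\epsilon}\in\mathcal{U}_{\epsilon}(\pi)$ exists pointwise in $s$, so the operator is well-defined.

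Next, to show $\mathcal{T}^{\pi^{\epsilon},\Omega}$ is a Bellman operator I would argue monotonicity and $\gamma$-contraction. Monotonicity is immediate because, for each fixed $\tilde{\pi}$, $V\mapsto \mathcal{T}^{\tilde{\pi},\Omega}V$ inherits monotonicity from $\mathcal{T}^{\tilde{\pi}}$ (the $\Omega(\tilde{\pi})$ term is $V$-independent), and a pointwise minimum of monotone maps is monotone. For contraction, fix $V_1,V_2$. For every $\tilde{\pi}$, $\mathcal{T}^{\tilde{\pi},\Omega}V_1-\mathcal{T}^{\tilde{\pi},\Omega}V_2=\mathcal{T}^{\tilde{\pi}}V_1-\mathcal{T}^{\tilde{\pi}}V_2$, so the regularizer drops out and the standard $\gamma$-contraction in $\ell_\infty$ holds uniformly in $\tilde{\pi}$. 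Applying the elementary bound $|\min_{x}f(x)-\min_{x}g(x)|\le \sup_x |f(x)-g(x)|$ on each state transfers this contraction to the min, giving $\|\mathcal{T}^{\pi^{\epsilon},\Omega}V_1-\mathcal{T}^{\pi^{\epsilon},\Omega}V_2\|_\infty\le\gamma\|V_1-V_2\|_\infty$.

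For distributional robustness I would verify the two conditions of Definition~\ref{def:drbop}, reinterpreted with $\mathcal{T}^{\pi,\Omega}$ in place of $\mathcal{T}^{\pi}$. The left inequality $\mathcal{T}^{\pi^{\epsilon},\Omega}V\le \mathcal{T}^{\pi,\Omega}V$ is immediate because $\pi\in\mathcal{U}_{\epsilon}(\pi)$ (self-KL is zero). For the right inequality, I would decompose
\[
\mathcal{T}^{\pi,\Omega}V-\mathcal{T}^{\pi^{\epsilon},\Omega}V=\underbrace{(\mathcal{T}^{\pi}V-\mathcal{T}^{\pi^{\epsilon}}V)}_{\text{(I)}}+\underbrace{\alpha(\mathcal{H}(\pi^{\epsilon})-\mathcal{H}(\pi))}_{\text{(II)}}.
\]
Term (I) is bounded exactly as in the proof of the preceding proposition via Lemma~\ref{lemma:delta-bellman} and Pinsker's inequality, giving an $O(R_{\max}\sqrt{\epsilon(s)}/(1-\gamma))$ bound pointwise. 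Term (II) is controlled by an entropy-continuity bound such as $|\mathcal{H}(p)-\mathcal{H}(q)|\le \|p-q\|_1\log(|\mathcal{A}|/\|p-q\|_1)$, combined once again with Pinsker to turn $\KL\le\epsilon$ into an $\ell_1$ bound, yielding an upper bound of order $\sqrt{\epsilon(s)}\log(|\mathcal{A}|/\sqrt{\epsilon(s)})$. Both contributions can be absorbed into a rescaled error sequence $\epsilon_t'=c\sqrt{\epsilon_t}\log(|\mathcal{A}|/\sqrt{\epsilon_t})$ that still satisfies the summability condition~\eqref{eq:eps-rate} whenever the original $\epsilon_t$ does (e.g. for the polynomial schedule of Theorem~\ref{th:dr-pi-convergence}), so $\mathcal{T}^{\pi^{\epsilon},\Omega}$ is distributionally robust.

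The main obstacle is term (II): entropy is not Lipschitz on the simplex, so one must be careful with the continuity estimate near the boundary. The standard Zhang/Ho-type inequality above suffices because it degrades only like $\sqrt{\epsilon}\log(1/\sqrt{\epsilon})$, which is still dominated by any polynomial rate satisfying~\eqref{eq:eps-rate}; otherwise everything else is a direct transcription of the unregularized proof.
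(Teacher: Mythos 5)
Your proposal is correct, and it is substantially more complete than the paper's own proof, which consists of exactly two sentences: the $\gamma$-contraction of $\mathcal{T}^{\pi,\Omega}$ is delegated to Corollary/results in \cite{geist2019theory}, and the existence of a minimizer is obtained (as in the unregularized case) from compactness of $\mathcal{U}_{\epsilon}(\pi)$ and continuity of $\tilde\pi\mapsto\mathcal{T}^{\tilde\pi,\Omega}V$. The paper never explicitly verifies either inequality of \eqref{eq:drb_def_1} for the regularized operator, nor does it record that the minimum of a family of monotone $\gamma$-contractions is again a monotone $\gamma$-contraction; it leaves all of this to the word ``analogously.'' Your treatment of the first claim (the $V$-independence of $\Omega(\tilde\pi)$ making the regularizer drop out of differences $\mathcal{T}^{\tilde\pi,\Omega}V_1-\mathcal{T}^{\tilde\pi,\Omega}V_2$, plus $|\min_x f-\min_x g|\le\sup_x|f-g|$) is exactly the argument the paper is implicitly invoking, just written out. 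Where you genuinely add something is the right inequality of \eqref{eq:drb_def_1}: the paper's unregularized proof controls $\mathcal{T}^{\pi}V-\mathcal{T}^{\pi^{\epsilon}}V$ via Lemma~\ref{lemma:delta-bellman} and Pinsker, but in the regularized case there is the additional term $\Omega(\pi^{\epsilon})-\Omega(\pi)$, which the paper silently ignores. Your entropy-continuity bound of order $\sqrt{\epsilon}\log(|\mathcal{A}|/\sqrt{\epsilon})$, absorbed into a rescaled error sequence that still satisfies \eqref{eq:eps-rate} for the polynomial schedules actually used in Theorem~\ref{th:dr-pi-convergence}, is precisely the missing step, and your caveat about entropy not being Lipschitz near the simplex boundary is the right thing to worry about. (The same ``error sequence is really $O(\sqrt{\epsilon_t})$ rather than $O(\epsilon_t)$'' issue is already present, and glossed over, in the paper's unregularized argument; you handle it consistently.) In short: same skeleton as the paper, but you supply the two verifications the paper waves at, one of which (the entropy term) requires a genuinely new estimate.
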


\begin{proof}
$\mathcal{T}^{\pi, \Omega}$ is a $\gamma$-contraction w.r.t to the $\ell_\infty$-norm on value functions~\cite{geist2019theory}. Analoguously to $\mathcal{T}^{\pi^{\epsilon}}$, $\mathcal{T}^{\pi^{\epsilon}, \Omega}$ has a solution in the set of policies since $\mathcal{U}_{\epsilon}(\pi)$ is compact and $\mathcal{T}^{\pi, \Omega}$ is continuous in $\pi$.
\end{proof}

The solution to regularized maximization problem~\eqref{eq:reg-optimal-policy} at the policy improvement step is given by the gradient of Fenchel dual~\eqref{eq:reg-optimal-policy}, also referred in the literature to the Boltzmann policy:
\begin{equation}
\label{eq:boltzmann-policy}
    \pi(a|s) \propto \exp(Q_{V}(s,a)/\alpha)
\end{equation}
where $\alpha > 0$ is the exploration temperature.
Thus, the adversarial policy~\eqref{eq:adv-policy} takes the following form:
\begin{equation}
    \label{eq:soft-adv-policy}
    \pi^{\epsilon}(a|s;\alpha, \lambda) \propto \exp((1/\alpha - 1/\lambda(s))Q_{V}(s,a))
\end{equation}

The resulting \emph{soft distributionally robust modified policy iteration scheme} is given by:
\begin{equation}
\label{eq:dr-gen-pi-soft}
\begin{cases}
    \pi_{t+1} \leftarrow \mathcal{G}^{\Omega}(\tilde{V}_t) \\
    \tilde{V}_{t+1} \leftarrow (\mathcal{T}^{\pi^{\epsilon_t}_{t+1}, \Omega})^m  \tilde{V}_t.
\end{cases}
\end{equation}
This is the counterpart of scheme~\eqref{eq:dr-gen-pi} for regularized policies. The difference lies in the presence of the regularizer $\Omega$ in both greedy and evaluation steps. 
Note that, as in~\eqref{eq:dr-gen-pi}, the evaluation step is performed w.r.t adversarial policy~\eqref{eq:soft-adv-policy}, while the greedy step is done w.r.t. Boltzmann policy~\eqref{eq:boltzmann-policy}. 

The scheme~\eqref{eq:dr-gen-pi-soft} different in nature from Soft Q-learning~\cite{haarnoja2017reinforcement} and other entropy-based approaches~\cite{nachum2017trust,haarnoja2018soft}. Despite apparent similarity in automatic temperature tuning, the scheme~\eqref{eq:dr-gen-pi-soft} adjusts temperature to provide a lower bound guarantee on policy state-values, while the above-mentioned entropy-based approaches adjust temperature to ensure a target entropy level alike a re-parametrization. Moreover, the temperature adjustment in our scheme~\eqref{eq:dr-gen-pi-soft} is only performed at the policy evaluation step.

Since approximate regularized MPI shares the same error propagation bounds as unregularized MPI according to Corollary 1 of~\cite{geist2019theory}, the convergence of scheme~\eqref{eq:dr-gen-pi-soft} is an adaptation of Theorem~\ref{th:dr-pi-convergence}.

\begin{algorithm}[htb]
   \caption{Distributionally Robust Policy Iteration}
   \label{alg:dr-pi}
    \begin{algorithmic}
   \STATE Initialize $V$, counters $n=0$ \COMMENT{Initialize}
   \STATE Set $C, \eta>0$
   \REPEAT
   \STATE $\pi \gets \mathcal{G}(V)$ \COMMENT{Maximizing policy}
   \STATE $\epsilon \gets C n^{-\eta}$ \COMMENT{Size of uncertainty size}
   \STATE $\lambda \gets$ optimization of~\eqref{eq:optimal-lambda} \COMMENT{Regularization parameter}
   \STATE $\pi^{\epsilon} \propto \exp(-Q_{V}/\lambda) \pi$ \COMMENT{Adversarial policy}
   \STATE $V \gets \mathcal{T}^{\pi^{\epsilon}} V$ \COMMENT{Adversarial Bellman operator}
   \UNTIL convergence
\end{algorithmic}
\end{algorithm}

\section{Continuous control}
\label{sec:cont-control}

Real-world applications frequently operate in continuous state/action space. To extend the distributionally robust approach to continuous setup, we need to efficiently compute the adversarial Bellman operator in~\eqref{eq:dr-gen-pi}. Thus, in Section~\ref{sec:kl-reg-bellman-op} we derive an approximated scheme for KL-regularized Bellman operators that only involves a modification of reward. This allows us to formulate \emph{distributionally robust soft actor-critic}, a practical algorithm that applies the soft distributional robustness scheme~\eqref{eq:dr-gen-pi-soft} to the soft actor-critic algorithm~\cite{haarnoja2018app} (see Section~\ref{sec:dr-sac}). 

\subsection{KL-regularized Bellman operator}
\label{sec:kl-reg-bellman-op}

The regularized Bellman operator can be written in terms of its Fenchel conjugate~(\ref{eq:fenchel-dual}):
$[\mathcal{T}^\Omega V](s) = \Omega^\star(Q_V(s, \cdot))$.
Define the regularization parameter $\lambda$ such that $\Omega_\lambda(\pi(\cdot|s)):= \lambda \Omega(\pi(\cdot|s))$ and consider the case of KL-divergence based regularization with respect to the prior policy $\mu \in \Delta_{\mathcal{S}}^{\mathcal{A}}$: $\Omega(\pi(\cdot|s)) = -\KL(\pi(\cdot|s) || \mu(\cdot|s))$. Then, the Fenchel conjugate is given by the smoothed maximum (minimum):
\begin{equation}
\label{eq:logsumexp}
    \Omega_\lambda^\star(Q_V(s, \cdot)) = \lambda \log \mathbb{E}_{a \sim \mu(\cdot|s)} \exp(Q_V(s, a)/ \lambda)
\end{equation}
for $\lambda > 0$ ($\lambda < 0$) respectively. 

When the action space is continuous, the computation of the dual function~\eqref{eq:logsumexp} is intractable. To overcome the problem, we derive computationally feasible approximation of the smoothed maximum (minimum) function. We notice that smoothed maximum (minimum) can be seen as the logarithm of \emph{moment-generating function} of the dual variable. By performing Taylor expansion of the moment-generating function around $\lambda \rightarrow +\infty$ ($\lambda \rightarrow -\infty$) and keeping terms up to the 1st order, we obtain:
\begin{equation}
\label{eq:logsumexp-taylor}
\begin{aligned}
    \Omega_\lambda^\star(Q_V(s, \cdot)) &= \mathbb{E}_{a \sim \mu}[Q_V(s,a)] \\&+ \frac{1}{2\lambda} \text{Var}_{a \sim \mu}(Q_V(s,a)) + \mathcal{O}\left(\frac{1}{\lambda^2}\right).
\end{aligned}
\end{equation}

This approximation gives a new perspective on KL-divergence regularized Bellman operators as encouraging ($\lambda > 0$) or penalizing variance ($\lambda < 0$) of Q-values under action distribution induced by the prior policy. The parameter $\lambda$ controls the amount of the regularization. In this view, distributionally robust Bellman operator can be seen as data-driven per-state variance penalization that adapts to the degree of uncertainty over the finite sample of data.

Approximation~\eqref{eq:logsumexp-taylor} provides an efficient way to compute the regularized Bellman operator through a simple modification of reward. Indeed, define potential function $\Phi(s) \in \mathbb{R}^{\mathcal{S}}$ as weighted variance of Q-values under the prior policy
\begin{equation*}
    \Phi(s) := \frac{1}{2\lambda}\text{Var}_{a \sim \mu}(Q_V(s,a))
\end{equation*}
and a reward shaping function $r^\Omega(s,a,s') \in \mathbb{R}^{\mathcal{S} \times \mathcal{A} \times \mathcal{S}}$ as
\begin{equation}
\label{eq:reward-shaping-func}
    r^\Omega(s,a,s') :=  r(s,a) + \gamma \Phi(s') - \Phi(s).
\end{equation}
Then, by applying Corollary 2 of~\citep{ng1999policy} Eq.~\eqref{eq:logsumexp-taylor} can be expressed using potential-based reward shaping:
\begin{equation}
\label{eq:safe-reward-shaping}
\begin{split}
    \Omega_\lambda^\star(Q_V(s, \cdot)) &\simeq Q_V(s,a) + \frac{1}{2\lambda} \text{Var}_{a \sim \mu}(Q_V(s,a)) \\ &= \mathbb{E}_{a \sim \mu, s' \sim p(\cdot|s,a)}[r^\Omega(s,a,s') + \gamma V(s')].
\end{split}
\end{equation}
For $\lambda < 0$, the reward shaping function~\eqref{eq:reward-shaping-func} encourages the policy to visit states with less variance over Q-values, i.e. expected to be "safer" states. 

\subsection{Distributionally robust soft actor-critic}
\label{sec:dr-sac}

We consider the class of entropy-regularized policies over continuous state/action space~\cite{haarnoja2018soft}. We apply the soft distributionally robust policy iteration scheme~\eqref{eq:dr-gen-pi-soft} using reward modification~\eqref{eq:safe-reward-shaping} to approximate the computation of regularized Bellman operator over continuous action space. 

As in~\cite{haarnoja2018soft}, we consider parametrized Gaussian policies $\pi_\theta(a|s) = \mathcal{N}(\mu_\theta(s), \Sigma^2_\theta(s))$. To compute a modified reward~(\ref{eq:safe-reward-shaping}), we approximate the variance of Q-values in~(\ref{eq:safe-reward-shaping}) using the 1st order Taylor approximation of Q-values around the mean action:
\begin{equation}
\label{eq:approx-var}
    \text{Var}_{a \sim \pi_\theta}(Q(s,a)) \simeq g_0(s)^T \Sigma_\theta(s) g_0(s)
\end{equation}
where $g_0(s) = \nabla_a Q(s,a)|_{a=\mu_\theta(s)}$. This approximation involves computing the gradient of Q-values w.r.t. action evaluated at the mean action. When actions are independent $\Sigma_\theta(s) = diag(\sigma_{1,\theta}(s),\dots, \sigma_{K,\theta}(s))$, the expression (\ref{eq:approx-var}) simplifies to computing the norm of the gradient weighted by the variance of corresponding action dimension:
\begin{equation}
\label{eq:approx-var-diag}
    \text{Var}_{a \sim \pi_\theta}(Q(s,a)) \simeq ||g_0(s)||^2_{diag(\sigma_{1,\theta}(s),\dots, \sigma_{K,\theta}(s))}
\end{equation}

We summarize the above steps in Algorithm~\ref{alg:dr-sac}.

\begin{algorithm}[htb]
   \caption{Distributionally Robust Soft Actor-Critic}
   \label{alg:dr-sac}
\begin{algorithmic}
   \STATE Initialize: actor $\pi_\theta$, critic $Q$
   \STATE Set entropy level $\mathcal{H}$
   \STATE Set $C, \eta > 0$
   \STATE Initialize $s$
   \FOR{number of epochs}
   \FOR{number of samples}
   \STATE $a \sim \pi_\theta(s)$, $s' \sim p(s'|s,a)$
   \STATE $\mathcal{D} \leftarrow \mathcal{D} \cup \{(s, a, r, s')\}$
   \ENDFOR
   \FOR{number of updates}
   \STATE $(s, a, r, s') \sim D$
   \STATE $\epsilon \gets C n^{-\eta}(s)$
   \STATE $\lambda \gets$ 1-d optimization of Eq.~(\ref{eq:optimal-lambda})
   \STATE $\sigma^2_Q(s) \gets \sum_{i=1}^{K} g_0(s)^2 * \sigma^2_{i,\theta}$, see Eq.~\eqref{eq:approx-var-diag}
   \STATE $r^{\Omega}(s,a) \gets r(s,a) + \frac{1}{2\lambda}(\gamma\sigma^2_Q(s') - \sigma^2_Q(s))$, see Eq.~\eqref{eq:reward-shaping-func}
   \STATE $Q(s,a) \gets r^{\Omega}(s,a) + \gamma (Q(s',\pi_\theta(s')) - \alpha \log \pi_\theta(s'))$, see Eq.~\eqref{eq:safe-reward-shaping}
   \STATE Update $\pi_\theta, \alpha$ as in Alg. 1 of~\cite{haarnoja2018soft}
   \ENDFOR
   \ENDFOR
\end{algorithmic}
\end{algorithm}

\section{Related work}
\label{sec:related-work}

\paragraph{Robust MDP}
In model-based setting, robust MDP framework has been proposed~\cite{nilim2004robustness,iyengar2005robust} that optimizes over the worst-case realization of uncertain MDP parameters. Specifically, the dynamic programming approach is used to optimize a minimax criterion over an uncertainty set that contains possible MDPs defined in terms of their transition matrices. Multiple works report the worst-case criterion may lead to overly conservative policies~\cite{mihatsch2002risk,gaskett2003reinforcement}. Variants of robust MDP formulations have been proposed to mitigate the conservativeness when additional information on parameter distribution is present~\cite{mannor2012lightning,xu2010distributionally,tirinzoni2018policy}. Differently, in this work, we employ adaptive uncertainty sets that reflect the amount of uncertainty associated with the finite sample size. We define our uncertainty set in terms of policies that arise naturally in stochastic iterative schemes and result in computationally efficient algorithms.

\paragraph{Risk-sensitive MDP}
The framework of risk-sensitive MDP optimizes a modified objective expressed using a risk-sensitive criterion, such as the expected exponential utility or variance-related measure, w.r.t. to the long-term policy performance. In the model-free context, risk-sensitive RL for expected exponential utility has been proposed in~\citep{borkar2002qlearning} and for coherent risk measures~\citep{tamar2015policy}. The proposed algorithms are computationally extensive as they involve integration over state space and nonconvex parameter optimization. Differently, we consider a short-term dynamic risk that shrinks with the amount of collected data, and thus, preserves the desired level of long-term risk. 

\paragraph{Adversarial RL}
Adversarial robustness in RL has been a focus of recent works. \cite{pinto2017robust} studied robustness to model parameters perturbations by manually engineering adversarial forces for a set of continuous control tasks. \cite{tessler2019action} integrates the adversarial policy into the agent's policy definition through a convex combination or a noisy action perturbation. The proposed methods are formulated as instances of two-player zero-sum Markov game~\cite{littman1994markov,perolat2015approximate}. 
The distributionally robust approach is related to a regularized zero-sum Markov game~\cite{geist2019theory}, where the adversary is regularized against the optimizing policy with a dynamic data-driven factor. Differently, it defines a multi-stage game where the first $m$ steps are played by the adversary.

\section{Experiments}
\label{sec:experiments}

\begin{figure*}[ht]
    \centering
    \includegraphics[width=\columnwidth,height=200pt]{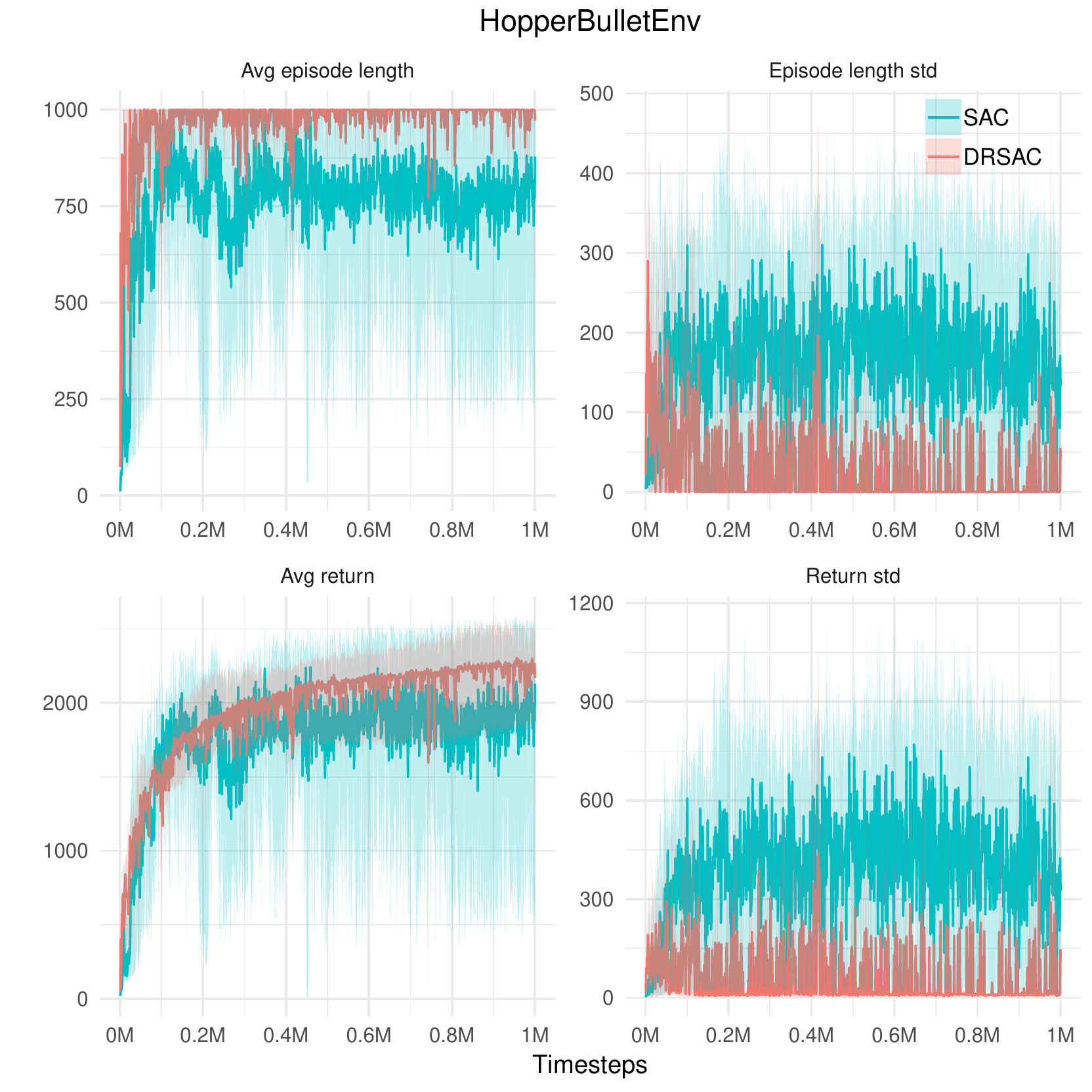}
    \includegraphics[width=\columnwidth,height=200pt]{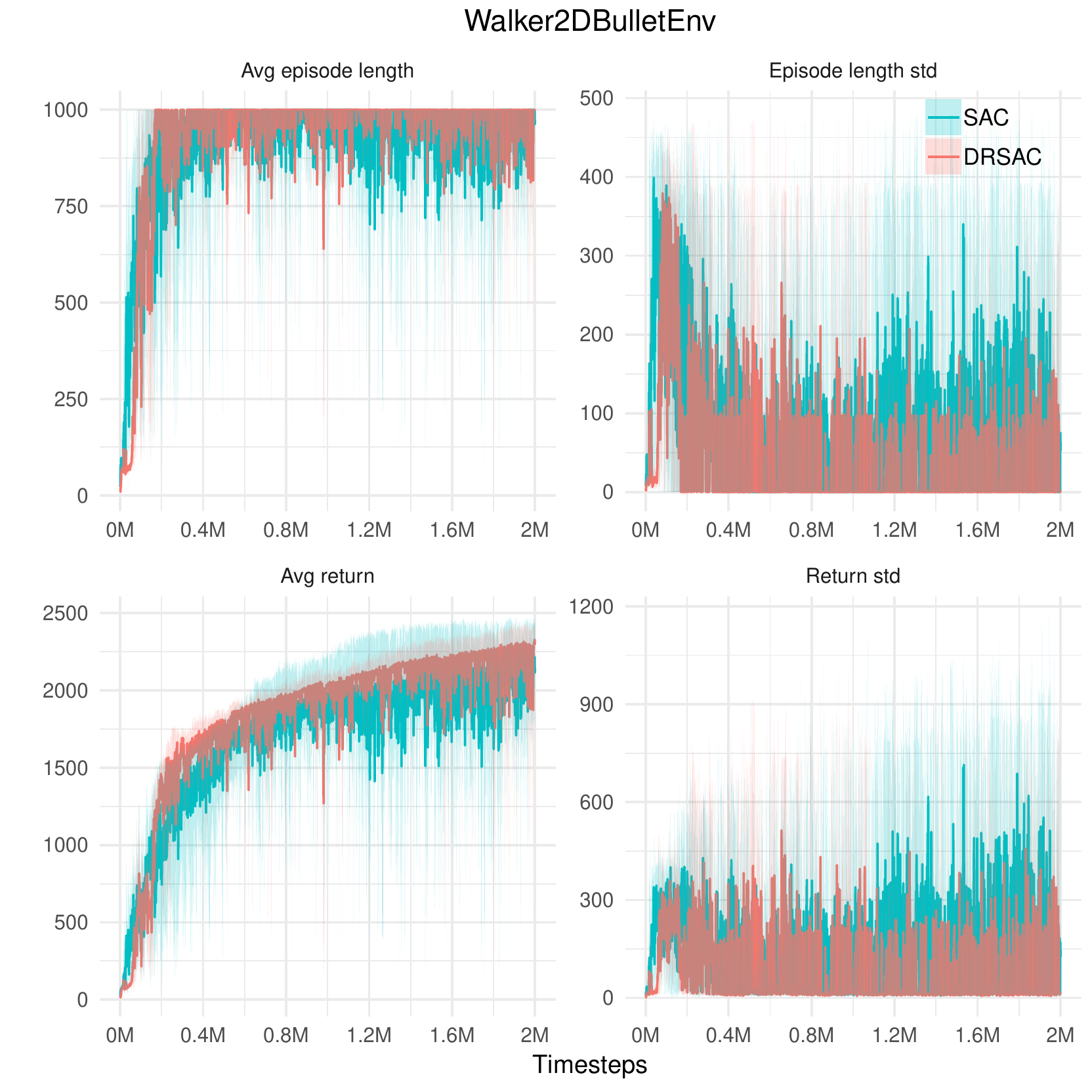}
    \caption{Average and standard deviation of return and episode length during training on Hopper and Walker2D domains. DR-SAC shows significant reduction of variance of return and episode length.
    }
    \label{fig:hopper-training-metrics}
\end{figure*}

\begin{table}
    \centering
    \begin{tabular}{l|c|c}
   & Hopper & Walker2D \\
   \hline
   Return Avg & -1.7\%$\pm$89 & -0.4\%$\pm$48 \\
   Return Std & \textbf{-76\%$\pm$21} & \textbf{-78\%$\pm$48} \\
   Episode Len Avg & +8\%$\pm$82 & +4.8\%$\pm$89 \\
   Episode Len Std & \textbf{-76\%$\pm$13} & \textbf{-77\%$\pm$42} \\
    \end{tabular}
    \caption{Percent change of DR-SAC vs. SAC metrics computed over 100 test trajectories of the final policy. DR-SAC policies generate trajectories with smaller variance of return and episode length and similar average values.}
    \label{tab:final-eval}
\end{table}

We compare the robustness w.r.t. estimation errors of distributionally robust soft actor-critic (DR-SAC) algorithm (Alg.~\ref{alg:dr-sac}) against soft actor-critic (SAC) ~\cite{haarnoja2018app}. We experiment on continuous control tasks from PyBullet simulation module~\cite{coumans2019}. In particular, we focus on Hopper and Walker2D domain that exhibit the most variance during training. We note that our scores are not directly comparable to the ones reported in~\cite{haarnoja2018app} since the Roboschool environments behind the PyBullet module are harder than the MuJoCo Gym environments.

We build our implementation on top of the Softlearning package\footnote{\url{https://github.com/rail-berkeley/softlearning}}. We use the same hyperparameters as described in~\cite{haarnoja2018app}. In addition, we set parameters $C=1$ and $\eta=0.5$. We plan to study the impact of hyperparameter choice in future work.

To implement per-state counter $n_t(s)$, we discretize the state space as follows. First, we discretize each dimension into ten equal-size bins, since each dimension takes values from a bounded range. Then, the state representation is given by a joint representation of its dimensions. We count the number of times the state representation appears along the policy trajectory. We leave the question of finding good state representation for future research.

First, we analyze robustness during training. Fig.~\ref{fig:hopper-training-metrics} shows mean and standard deviation of metrics computed over 5 evaluation rollouts at each training step using stochastic policy. Each evaluation rollout spans over 1000 environment steps. We perform 5 runs of each algorithm with different random seed. The solid curves corresponds to the mean and the shaded region to the minimum and maximum returns over the 5 trials. As expected, the DR-SAC algorithm greatly reduces variance during training in terms of standard deviation of return and episode length. We note that the absolute value of average return is similar to the one of SAC algorithm, but the upper confidence bound shows clearly lower. This empirically confirms the safety guarantee provided by the DR-SAC.

Next, we analyze the final policy performance. Table~\ref{tab:final-eval} presents evaluation results computed across 100 test trajectories over stochastic policies. The policies produced using DR-SAC algorithm generate trajectories with significantly smaller variance of return and episode length. The mean performance and episode length do not show statistically significant difference. Thus, DR-SAC achieves smaller variance of performance without decreasing the average value.

Finally, the video demonstrations of learned policies are available online\footnote{\url{
https://drive.google.com/open?id=1L63_52yJQsuZXpG6qFSz2P1MWiZFStR6}}. Qualitatively, the movements of DR-SAC policies are slower and less abrupt than the ones of SAC policies.

\section{Conclusion}
\label{sec:conclusion}
We study the risk-averse exploration in approximate RL setting. We propose the distributionally robust modified policy iteration scheme that implements safety in policy evaluation step w.r.t. estimation errors. The proposed scheme is based on a family of distributionally robust Bellman operators that provide lower bound guarantee on policy state-values. From a theoretical perspective, we establish the convergence of our scheme to the optimal policy. Practically, the proposed scheme results in tractable algorithms both in the discrete and continuous settings. The proposed practical algorithm implements a mixed exploration strategy that ensures safety at short-term and encourages exploration at long-term.
Our experimental results show that distributional robustness is a promising direction for improving stability of training and ensuring the safe behaviour RL algorithms. In future work, we plan to extend the experimental evaluation to more tasks and provide guidance on the choice of hyperparameters. 

\bibliographystyle{icml2019}

\cleardoublepage
\appendix
\section{Appendix}

\subsection{Convergence of distributionally robust MPI}
\label{sec:proofs}

\begin{lemma}
\label{lemma:delta-bellman}
Let $\pi$ and $\pi'$ be two policies, and define $\Delta \pi := \pi'(\cdot|s)-\pi(\cdot|s)$, for every state $s \in \mathcal S$. Then
\begin{equation*}
    |(\mathcal T^{\pi'} V)(s)-(\mathcal T^\pi V)(s)| \le (R_{\max} + \gamma \|V\|_\infty)\|\Delta\pi(\cdot|s)\|_1.
\end{equation*}
\end{lemma}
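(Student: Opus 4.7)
The plan is to expand both $(\mathcal T^{\pi'}V)(s)$ and $(\mathcal T^{\pi}V)(s)$ using the inner-product form~\eqref{eq:q-bellman} of the Bellman operator, so that the difference collapses into a single inner product against $\Delta\pi(\cdot|s)$, and then apply Hölder's inequality together with a uniform bound on $Q_V$.

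Concretely, I would first write
\begin{equation*}
(\mathcal T^{\pi'} V)(s) - (\mathcal T^\pi V)(s) = \langle \pi'(\cdot|s), Q_V(s,\cdot)\rangle - \langle \pi(\cdot|s), Q_V(s,\cdot)\rangle = \langle \Delta\pi(\cdot|s), Q_V(s,\cdot)\rangle,
\end{equation*}
which uses only linearity of the inner product and the definition $Q_V(s,a) = r(s,a) + \gamma\,\mathbb E_{s'\sim p(\cdot|s,a)}[V(s')]$. Taking absolute values and applying the $\ell_1/\ell_\infty$ Hölder inequality yields
\begin{equation*}
|(\mathcal T^{\pi'}V)(s) - (\mathcal T^\pi V)(s)| \le \|\Delta\pi(\cdot|s)\|_1 \cdot \max_{a \in \mathcal A}|Q_V(s,a)|.
\end{equation*}

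The remaining step is to control $\max_a |Q_V(s,a)|$ by $R_{\max} + \gamma\|V\|_\infty$. This follows from the triangle inequality applied to the two terms defining $Q_V$: $|r(s,a)| \le R_{\max}$ by assumption, and $|\mathbb E_{s'\sim p(\cdot|s,a)}[V(s')]| \le \|V\|_\infty$ since $p(\cdot|s,a)$ is a probability distribution, so $|Q_V(s,a)| \le R_{\max} + \gamma\|V\|_\infty$ uniformly in $a$. Combining the two inequalities finishes the proof.

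There is no real obstacle here; the statement is essentially a one-line application of Hölder's inequality after rewriting the Bellman operator as an inner product, and the only substantive ingredient is the boundedness of $r$ and $V$. The main thing to be careful about is to keep the per-state inequality (no expectation over $s$) and to note that the bound on $Q_V$ is $\|\cdot\|_\infty$ over actions at the fixed state $s$, which is majorized by the global bound $R_{\max} + \gamma\|V\|_\infty$.
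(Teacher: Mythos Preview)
Your proof is correct and follows essentially the same approach as the paper: both expand the Bellman-operator difference as an inner product against $\Delta\pi(\cdot|s)$ and then apply the $\ell_1/\ell_\infty$ H\"older inequality together with the bounds $|r|\le R_{\max}$ and $|\mathbb E_{s'}V(s')|\le\|V\|_\infty$. The only cosmetic difference is that you package the reward and transition terms together into $Q_V$ and bound $\|Q_V(s,\cdot)\|_\infty$ in one step, whereas the paper splits the difference into the reward piece $\Delta\pi^T r(s,\cdot)$ and the transition piece $\gamma\,\Delta\pi^T P(\cdot|s,\cdot)V$ and applies H\"older to each separately; the arithmetic is identical.
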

\begin{proof}
We first note that $\mathcal \pi \mapsto \mathcal T^\pi$ is Lipschitz w.r.t the TV distance on policies. By direct computation, one has
\begin{eqnarray*}
    &|(\mathcal T^{\pi'} V)(s)-(\mathcal T^\pi V)(s)|\\
    &=|r^{\pi'}(s) + \gamma P^{\pi'}(s,\cdot)^TV-r^\pi(s)-\gamma P^\pi(s,\cdot)^TV|\\
    &=
    |(\pi'(\cdot|s)-\pi(\cdot|s))^Tr(s,\cdot) +
    (P^{\pi'}(s|\cdot)-P^\pi(\cdot|s))^TV|\\
  &\le |(\pi'(\cdot|s)-\pi(\cdot|s))^Tr(s,\cdot)| +
  |\Delta\pi(\cdot|s)P(\cdot|s,\cdot)V|\\
&\le \|\Delta\pi(\cdot|s)\|_1\|r(s,\cdot)\|_\infty +
\|\Delta\pi(\cdot|s)\|_1\|P(\cdot|s,\cdot)^TV\|_\infty\\
&\le (R_{\max} + \gamma \|V\|_\infty)\|\Delta\pi(\cdot|s)\|_1.
\end{eqnarray*}
where in the last but one inequality, we have used the fact that
$\|P(\cdot|s,\cdot)V\|_\infty := \max_{a \in \mathcal A}|P(\cdot|s,a)^T V| \le
\max_{a \in \mathcal A}\|P(\cdot|s,a)^T V\|_1\|V\|_\infty = \|V\|_\infty$,
because $\|P(\cdot|,s,a)\|_1 = 1$ for all $a \in \mathcal A$ since $P$ is a
transition matrix.
\end{proof}

\drpi*

\begin{proof}
\textit{Part (A)} 
Let $\pi$ and $\pi'$ be two policies, and set $\Delta \pi := \pi'(\cdot|s)-\pi(\cdot|s)$, for every state $s \in \mathcal S$. Then, given any value function $v \in \mathbb R^{\mathcal S}$ and applying Lemma~\ref{lemma:delta-bellman}, one has
\begin{equation*}
    |(\mathcal T^{\pi'} V)(s)-(\mathcal T^\pi V)(s)| \le (R_{\max} + \gamma \|V\|_\infty)\|\Delta\pi(\cdot|s)\|_1.
\label{eq:lip}
\end{equation*}
Further, using Pinsker's inequality, one has 
\begin{equation}
|(\mathcal T^{\pi'} V)(s)-(\mathcal T^\pi V)(s)| \le (R_{\max} + \gamma \|V\|_\infty)\sqrt{D_{KL}(\pi'\|\pi)}.
\label{eq:toto}
\end{equation}

Now, using the inequality \eqref{eq:toto} and the definition of $\tilde{V}_t$
\begin{eqnarray*}
    0 &\le ((T^{\pi_t^{\epsilon_t}})^m \tilde{V}_{t-1})(s)-\tilde{V}_t(s)\\
    &= ((\mathcal
    T^{\pi_t^{\epsilon_t}})^m \tilde{V}_{t-1})(s)-((T^{\pi_t})^m \tilde{V}_{t-1})(s)\\
    &\le
    (R_{\max} + \gamma \|\tilde{V}_{t-1}\|_\infty)\sqrt{D_{KL}(\pi_t^{\epsilon_t}(\cdot|s)\|\pi_t(\cdot|s))}\\
    &\le \|\tilde{V}_{t-1}\|_\infty (1-\gamma)^{-1} R_{\max}\epsilon_t(s),
\end{eqnarray*}
where, in the last inequality we have used the fact that $\|\tilde{V}_{t-1}\|_\infty \le (1-\gamma)^{-1}R_{\max}$.
On the other hand, by construction of the errors $\epsilon_t$, one has
$$
\|\epsilon_t\|_\infty = \max_{s}\epsilon_t(s) = \underset{s \mid n_t(s) \ge
  t/S}{\max} C n_t(s)^{-\eta} \le C S^{\eta} t^{-\eta}.
$$
Thus, setting $\Box:=(1-\gamma)^{-1}R_{\max}$, for large $N$, one can bound $E_N$ as follows:
  \begin{eqnarray*}
    E_N &= \sum_{t=1}^{N-1}\gamma^{t}\|\delta_{N-t}\|_\infty \le
          \Box
    \sum_{t=1}^{N-1}\gamma^{N-t}\|\epsilon_t\|_\infty\\
    &\le \Box C S^{\eta} \gamma^N
      \sum_{t=1}^{N-1}\gamma^{-t}t^{-\eta} \sim \frac{C S^{\eta}}{(1-\gamma)N^{\eta}},
  \end{eqnarray*}
where the last asymptote is via this MathOverflow post \url{https://mathoverflow.net/q/329893}. The desired result then follows from Lemma \ref{thm:scherrer}.  
  
\textit{Part (B).} Follows from definition of $\pi^{\epsilon_t}$.
\end{proof}

\end{document}